\title{Safe Supervisory Control of Soft Robot Actuators}
\author{Andrew P. Sabelhaus,$^{1\ast}$ Zach J. Patterson,$^{2}$ Anthony T. Wertz$^{3}$,\\ Carmel Majidi$^{2,3}$\\
{$^{1}$Department of Mechanical Engineering, Boston University,}\\
{110 Cummington Mall, Boston, MA 02215, USA}\\
{$^{2}$Department of Mechanical Engineering, Carnegie Mellon University,}\\
{5000 Forbes Ave, Pittsburgh, PA 15213, USA}\\
{$^{3}$Robotics Institute, Carnegie Mellon University,}\\
{5000 Forbes Ave, Pittsburgh, PA 15213, USA}\\
{$^\ast$To whom correspondence should be addressed;}\\
{E-mail:  asabelha@bu.edu.}
}
\begin{document} 

\maketitle 

\keywords{Control Systems, Soft Actuators, Safety, Shape Memory Alloy}

\begin{abstract}
  Although soft robots show safer interactions with their environment than traditional robots, soft mechanisms and actuators still have significant potential for damage or degradation particularly during unmodeled contact. 
  This article introduces a feedback strategy for safe soft actuator operation during control of a soft robot.
  To do so, a supervisory controller monitors actuator state and dynamically saturates control inputs to avoid conditions that could lead to physical damage. 
  We prove that, under certain conditions, the supervisory controller is stable and verifiably safe.
  We then demonstrate completely onboard operation of the supervisory controller using a soft thermally-actuated robot limb with embedded shape memory alloy (SMA) actuators and sensing.  
  Tests performed with the supervisor verify its theoretical properties and show stabilization of the robot limb's pose in free space.
  Finally, experiments show that our approach prevents overheating during contact (including environmental constraints and human contact) or when infeasible motions are commanded. 
  This supervisory controller, and its ability to be executed with completely onboard sensing, has the potential to make soft robot actuators reliable enough for practical use.
\end{abstract}

\section{Introduction}

One of the most prevalent claims about soft robots is their intrinsic safety when interacting with humans or the environment~\cite{laschi_soft_2016,majidi_soft_2014}.
Less commonly discussed are new challenges in safety introduced by the novel soft actuators~\cite{zhang_robotic_2019} required for generating motion.
For rigid robots, typical electromagnetic actuators (motors) are of little concern in comparison to the robot body's inertia in damaging its surroundings or causing injury~\cite{vasicSafetyIssuesHumanrobot2013,dhillonRobotSystemsReliability2002}.
In contrast, soft actuators can fail dramatically, as practitioners may recognize.
Informally, pneumatic balloons can pop~\cite{terrynSelfhealingSoftPneumatic2017}, thermal actuators can overheat and cause fire risks or burns to human skin~\cite{soother_challenges_2020}, and dielectrics can cause dangerous arcing~\cite{planteLargescaleFailureModes2006,bilodeauSelfHealingDamageResilience2017}, among others.
As of yet, these risks have been mitigated by simple bespoke system designs, hard limits on actuation input~\cite{yeeharntehArchitectureFastAccurate2008}, or open-loop actuation~\cite{patterson_untethered_2020,Huang2019}. 
Incorporating automatic control into soft robots demands more generalizable and robust approaches to actuator safety.

This article proposes a feedback control framework that ensures safety of a class of soft robot actuators.
The framework employs a model-based supervisor in tandem with a primary, unspecified, control strategy - which we term the \textit{pose} controller (Fig.~\ref{fig:overview}(e)-(g)).
We demonstrate our framework on a thermal shape memory alloy (SMA) actuator with two different pose controllers in the presence of environmental contact (Fig.~\ref{fig:overview}(a)-(d)). 
This task presents a generalizable challenge as the cause of failure (excess heat) can only be indirectly monitored and controlled.

\begin{figure}[htbp]
    \vspace{0.5cm}
    \centering
    \includegraphics[width=6in]{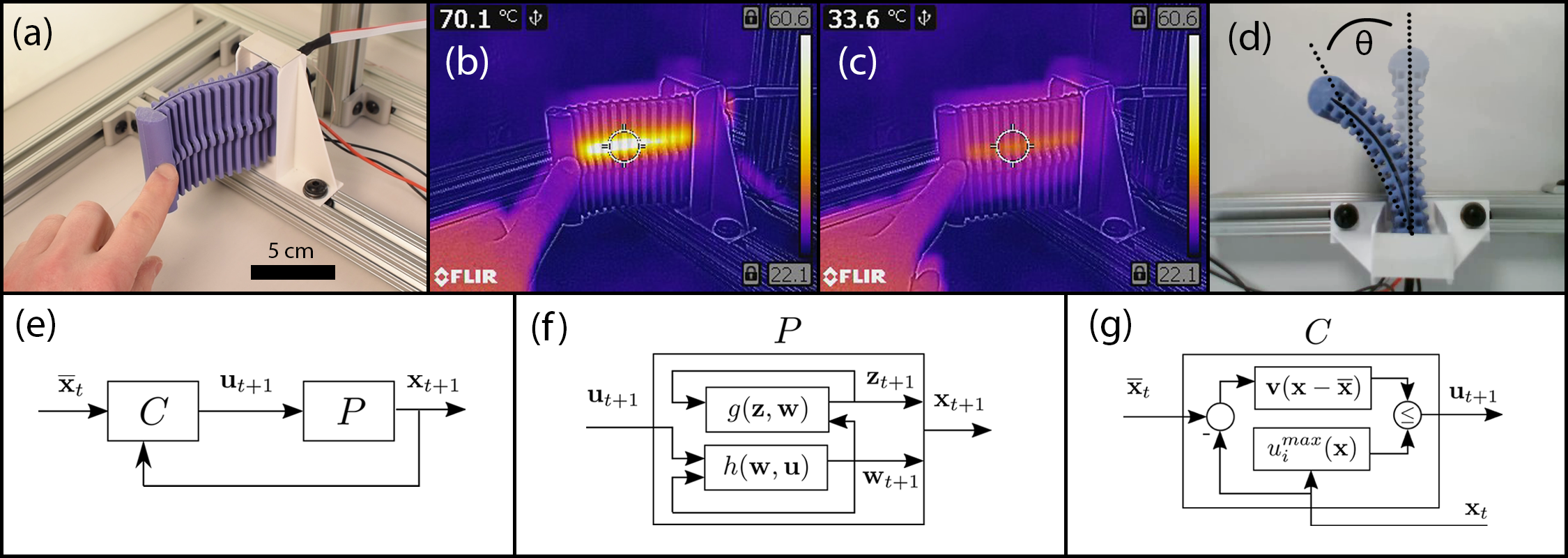}
    \caption{(a) When feedback controllers for soft robots encounter unmodeled physical interactions, such as with the environment or humans, the resulting contact loads or kinematic constraints can cause damage to the robots themselves. (b,c) Our approach prevents these unsafe situations - in our case, our supervisory controller prevents overheating. (d) The example testbed is a soft limb with a single degree of freedom. (e) Our controller operates in a feedback framework that (f) assumes separate actuator dynamics vs. body dynamics. (g) Our supervisor operates in conjunction with any arbitrary underlying controller to apply safe inputs.}
    \label{fig:overview}
    \vspace{0.5cm}
\end{figure}

Specifically, this article contributes:

\begin{enumerate}
    \item A provably-safe, provably-stable supervisory controller for soft robot actuators modeled as affine systems,
    \item A provably-safe integration of the supervisor with any underlying pose controller, and
    \item A verification of the framework on a soft robot limb, maintaining safe actuator states, in an otherwise dangerous task.
\end{enumerate}

\subsection{Background: Robot Safety}

Reliable use of robots in practical settings requires maintaining safe operation and consistent performance throughout their lifespan, regardless of environmental contact or human interaction~\cite{vasicSafetyIssuesHumanrobot2013,dhillonRobotSystemsReliability2002}.
The informal concept of `safety' as limiting force or position~\cite{lasota_survey_2017,zanchettin_safety_2016,tonietti_design_2005} is well-suited for soft robots, since mechanical conformability to contacting surfaces naturally restricts motions.
Even when such limits are exceeded, some soft polymers can self-heal when mechanically damaged and recover their material properties~\cite{terrynSelfhealingSoftPneumatic2017,bilodeauSelfHealingDamageResilience2017}.
However, most soft robots rely on mechanism design for safety~\cite{villanuevaBiomimeticRoboticJellyfish2011,coloradoBiomechanicsSmartWings2012,whiteSoftParallelKinematic2018,huangHighlyDynamicShape2019a}, and there are only few examples of computational intelligence for verifiable behaviors~\cite{balasubramanian_fault_2020}.

In contrast, a formal specification of `safety' is a set inclusion problem, where if a robot's state remains within a certain set for all time then it is considered safe: the set is \textit{invariant} under the system's dynamics~\cite{dang_reachability_1998}. 
Computational techniques such as control barrier functions~\cite{ames_control_2017}, model-predictive control~\cite{massera_filho_safe_2017}, and formal verification~\cite{mitra_safety_2003} use this framework for safety verification.
However, each approach comes with computation and implementation challenges, particularly the requirement of an accurate low-dimensional dynamics model, which is a longstanding challenge in soft robotics.

\subsection{Soft Actuator Safety and Degradation}

The unique material properties of actuators used in soft robotics introduce additional safety challenges: catastrophic pressure failures, high-temperatures and fire, or high-voltage arcing.
As of yet, these dangers have been indirectly addressed by reducing actuation force~\cite{stilli_novel_2017} or by introducing design modifications~\cite{shin_new_2014}.
Feedback control has also only indirectly addressed actuator safety, using approaches such as low impedance~\cite{whitney_hybrid_2016} without verification, open-loop planning~\cite{sabelhaus_inverse_2020,wertz_trajectory_2022} only in known environments, or optimization-based control with only state constraints~\cite{sabelhaus_model-predictive_2021}.

For SMA actuators in particular, prior work in safety has focused on sensing \textit{intrinsic} actuator states (i.e. temperature) via inverting a constitutive model~\cite{ho_modeling_2013,russellImprovingResponseSMA1995} and applying a fixed threshold~\cite{kuribayashi_improvement_1991,liuReinforcementLearningControl2019,yeeharntehArchitectureFastAccurate2008,jinStarfishRobotBased2016}, but these have not shown environmental contact~\cite{soother_challenges_2020} or formal verification.
In addition to physical safety, SMA actuators are known to suffer from degradation due to thermal and mechanical cycling~\cite{sofla_cyclic_2008,villanuevaBiomimeticRoboticJellyfish2011,mohdjaniReviewShapeMemory2014,chinMachineLearningSoft2020}, which when viewed as a temperature constraint~\cite{vanhumbeeckCYCLINGEFFECTSFATIGUE1991} can also be formulated as a safe control problem.

\subsection{Approach}

Our framework considers dynamic saturation as a form of supervisory control, motivated by prior work that uses reachability computations to determine `activation' of a supervisor~\cite{zhang_set_2019}.
This framework is a simplistic version of the formal supervisory controller framework~\cite{qi_state-based_2015,colombo_supervisory_2011,dulce-galindo_autonomous_2019} with only a single switching state.

Though our proposed approach makes a number of assumptions about the robot's actuator dynamics, as are relevant to our use of SMA wires, it may be generalizable among a wider class of soft actuators.
This article assumes one state per actuator, that the actuator states are independent, and that actuator dynamics are an affine system~\cite{wertz_trajectory_2022}.
Other soft actuators may also be modeled by a single parameter, such as piston displacement in pneumatic or hydraulic actuators~\cite{marcheseDynamicsTrajectoryOptimization2015,stolzle_piston-driven_2021}, or cable retraction for cable-driven soft robots~\cite{gravagneLargeDeflectionDynamics2003,renda3DSteadystateModel2012,jarrettRobustControlCableDriven2017a}.
Our supervisor may also be adapted for nonlinear actuators via local linearizations and conservative parameter tuning.
This article includes a study of our controller's tuning parameter to assist in its application.

\section{Supervisory Control for a Soft Robotic Actuator}

Derivation of the supervisor's controller proceeds below by presenting a dynamic saturation condition, then combining with a pose controller, and finally verifying safety.

\subsection{System Model}

The physical states of our particular SMA-powered robot include the robot body's bending curvature and the actuators temperatures, described later in Sec.~\ref{sec:sma_model}.
For a formulation that is generally applicable to soft actuators of all kinds, we abstract these states into $\mathbf{x} \in \mathbb{R}^n$, consisting of the actuator states $\mathbf{w} \in \mathbb{R}^m$ and states relating to the remainder of the system $\mathbf{z}$, as in $\mathbf{x} = [\mathbf{z}^\top \mathbf{w}^\top]^\top$, and some inputs $\mathbf{u} \in \mathbb{R}^p$.
We assume dynamics of the form

\begin{equation}\label{eqn:fullsys}
    \mathbf{x}(k+1) = f(\mathbf{x}(k), \mathbf{u}(k)) = \begin{bmatrix} g(\mathbf{z}, \mathbf{w}) \\ h(\mathbf{w}, \mathbf{u}) \end{bmatrix},
\end{equation}

\noindent where the actuator states influence the pose but not vice-versa.
We again suggest that local linearization of coupled actuator-pose systems may enable equations of this form.

Our supervisory control system does not require knowledge of the pose dynamics $g(\mathbf{z}, \mathbf{w})$, so no soft beam mechanics models are required here.
We do assume that the discrete-time actuator dynamics $h(\mathbf{w}, \mathbf{u})$ are known, that our soft robot has one input per actuator state, and actuators that are uncoupled: there are $p=m$ actuators and

\[
w_i(k+1) = h_i(w_i(k), u_i(k)) \quad \quad \forall i = 1\hdots m.
\]

\noindent In addition, we assume that each actuator dynamics function $h(\cdot, \cdot)$ is a linear (affine) system of the form

\begin{equation}\label{eqn:act_affine}
    h_i(w_i(k), u_i(k)) = a_{1,i} w_i(k) + a_{2,i} u_i(k) + a_3.
\end{equation}

Dropping the $i$ indexing, consider each actuator's system dynamics individually.
We can use the affine augmentation $\widetilde{\mathbf{w}} = [w, \; \; 1]^\top$ to rewrite eqn.~(\ref{eqn:act_affine}) as

\begin{equation}\label{eqn:act_linsys}
    \widetilde{\mathbf{w}}(k+1) = \mathbf{A} \widetilde{\mathbf{w}}(k) + \mathbf{B} u(k)
\end{equation}

\noindent where

\begin{equation}
    \mathbf{A} = \begin{bmatrix} a_1 & a_3 \\ 0 & 1 \end{bmatrix}, \quad \quad \mathbf{B} = \begin{bmatrix} a_2 \\ 0\end{bmatrix}. \label{eqn:bAbB}
\end{equation}

\noindent The dynamics for each actuator, eqn.~(\ref{eqn:act_linsys}), are a linear single-input system, and we therefore can use linear system control techniques for the actuator itself despite the presumably nonlinear body dynamics $g(\mathbf{z}, \mathbf{w})$.

\subsection{The Supervisor's Saturating Controller}

Our problem statement considers a constraint on the actuator state of the form $w \leq w^{MAX}$, i.e., a maximum operating limit.
In the augmented form, this limit is $\widetilde{\mathbf{w}}^{MAX} = [w^{MAX}, \; \; 1]^\top$.
Consider first what magnitude of our input it would take to reach an arbitrary setpoint of this form, $\widetilde{\mathbf{w}}^{SET}$, at the next point in time given an observed state $\widetilde{\mathbf{w}}(k)$:

\begin{equation}\label{eqn:onestep_set}
    \widetilde{\mathbf{w}}(k+1) = \widetilde{\mathbf{w}}^{SET}.
\end{equation}

\noindent From linear systems theory, we have the celebrated result~\cite{baggio_data-driven_2019} that if $\widetilde{\mathbf{w}}^{SET}$ is reachable from $\widetilde{\mathbf{w}}(k)$, it can be steered there in $K$-many steps with minimum-energy cost by applying the input sequence

\begin{equation}
    u^*(t) = \mathbf{B}^\top (\mathbf{A}^\top)^{K-t-1} \mathbf{W}_K^\dag (\widetilde{\mathbf{w}}^{SET} - \mathbf{A}^K \widetilde{\mathbf{w}}(t))
\end{equation}

\noindent at each timestep $t$ in the horizon, where $\mathbf{W}_K$ is the controllability Grammian for discrete-time systems:

\begin{equation}\label{eqn:grammian}
    \mathbf{W}_K = \sum_{t=k}^{K-1} \mathbf{A}^t \mathbf{B} \mathbf{B}^\top (\mathbf{A}^\top)^t,
\end{equation}

\noindent and $(\cdot)^\dag$ is the pseudoinverse.

For the aggressive case of one step ahead where $K=k+1$, as implied by eqn.~(\ref{eqn:onestep_set}), substituting in the definition of the Grammian from eqn.~(\ref{eqn:grammian}) at the single timestep of $t=k$ becomes

\begin{equation}
    u^*(k) = \mathbf{B}^\top (\mathbf{B}\mathbf{B}^\top)^\dag (\widetilde{\mathbf{w}}^{SET} - \mathbf{A} \widetilde{\mathbf{w}}(k)).
\end{equation}

Observe next that the actuator dynamics in eqn.~(\ref{eqn:act_affine}) are a monotone control system~\cite{angeli_monotone_2003}, i.e., for two different inputs $u_1$ and $u_2$ applied at the same known state $w$, dropping time index for brevity,

\begin{align}
    u_1 \leq u_2 \quad & \Rightarrow \quad h(w, u_1) \leq h(w, u_2), \notag \\
    & \Rightarrow \quad w_1(k+1) \leq w_2(k+1).
\end{align}

\noindent Intuitively, our actuator's state has a lower value if we apply less input: less electrical power applied to our thermal muscles means lower temperature. 
Sec. 1 of the Supplementary Information S1 formally shows monotonicity.

A concept, then, to bring our system as close to $\widetilde{\mathbf{w}}^{SET}$ as possible without exceeding $\widetilde{\mathbf{w}}^{SET}$ would be to apply some fraction of this one step max, since $u(k) < u^{MAX}(k) \; \Rightarrow \; \widetilde{\mathbf{w}}(k+1) < \widetilde{\mathbf{w}}^{MAX}$.
Choose a scalar multiplier $\gamma \in (0,1)$ for $u^{MAX}$, which then gives a candidate feedback controller as

\begin{equation}
    u(k) = \gamma \mathbf{B}^\top (\mathbf{B}\mathbf{B}^\top)^\dag (\widetilde{\mathbf{w}}^{SET} - \mathbf{A} \widetilde{\mathbf{w}}(k)).
\end{equation}

\noindent Closing the loop with this controller produces the autonomous dynamics of

\begin{align}
    \widetilde{\mathbf{w}}(k+1) & = (1-\gamma) \mathbf{A} \widetilde{\mathbf{w}}(k) + \gamma \widetilde{\mathbf{w}}^{SET}. \label{eqn:wset_cl}
\end{align}

Make the following substitution to analyze this system as a linear system, eliminating the affine setpoint offset:

\begin{equation}
    \widetilde{\mathbf{w}}' \vcentcolon= \widetilde{\mathbf{w}} - \left(\mathbf{I} - (1-\gamma)\mathbf{A} \right)^{-1} \gamma \widetilde{\mathbf{w}}^{SET}.
\end{equation}

\noindent Substitution into eqn.~(\ref{eqn:wset_cl}) gives

\begin{equation}\label{eqn:cl_wtildeprime}
    \widetilde{\mathbf{w}}'(k+1) = (1-\gamma) \mathbf{A} \widetilde{\mathbf{w}}'(k).
\end{equation}

\noindent The equilibrium point under consideration for the closed-loop system of eqn.~(\ref{eqn:cl_wtildeprime}) is therefore

\[
\widetilde{\mathbf{w}}'=0 \quad \Rightarrow \quad \widetilde{\mathbf{w}} = \left(\mathbf{I} - (1-\gamma)\mathbf{A} \right)^{-1} \gamma \widetilde{\mathbf{w}}^{SET}.
\]

Notice this equilibrium point is \textit{not} equal to our setpoint ($\widetilde{\mathbf{w}}^{SET}$). 
The inclusion of $\gamma$ scales the equilibrium point in addition to slowing convergence.
However, we can then select our $\widetilde{\mathbf{w}}^{SET}$ such that our equilibrium point becomes the constraint boundary $\widetilde{\mathbf{w}}^{MAX}$,

\begin{align}
    \widetilde{\mathbf{w}}^{MAX} & = \left(\mathbf{I} - (1-\gamma)\mathbf{A} \right)^{-1} \gamma \widetilde{\mathbf{w}}^{SET}, \\
    \therefore \widetilde{\mathbf{w}}^{SET} & = \frac{1}{\gamma} \left(\mathbf{I} - (1-\gamma)\mathbf{A} \right) \widetilde{\mathbf{w}}^{MAX}. \label{eqn:wset_substitute}
\end{align}

Combining, the full form of our supervisor's feedback controller that takes the actuator state to the boundary of its constraint with a convergence rate of $\gamma$ is




\begin{align}
    u(k)^{MAX} =  \gamma \mathbf{B}^\top (\mathbf{B}\mathbf{B}^\top)^\dag \left(\frac{1}{\gamma} (\mathbf{I} - (1-\gamma)\mathbf{A} ) \widetilde{\mathbf{w}}^{MAX} - \mathbf{A} \widetilde{\mathbf{w}}(k) \right). \label{eqn:supervisor}
\end{align}

Lastly, we find the error dynamics by defining $\mathbf{e} \vcentcolon= \widetilde{\mathbf{w}} - \widetilde{\mathbf{w}}^{MAX}$, and with some algebra on eqns. (\ref{eqn:cl_wtildeprime})-(\ref{eqn:wset_substitute}), the closed-loop response is

\begin{equation}\label{eqn:cl_err}
    \mathbf{e}(k+1) = (1-\gamma) \mathbf{A} \mathbf{e}(k).
\end{equation}

This is the same as eqn.~(\ref{eqn:cl_wtildeprime}), so the closed-loop system is stable if the open-loop system is stable and $\gamma \in (0,1)$.
See Supplementary Information S1 Sec. 2.1 for a proof.

\subsection{Safety Verification of the Supervisor's Controller}

Operating the supervisor's controller gives the dynamics in eqn.~(\ref{eqn:cl_err}).
We assume the closed-loop system has been designed via the criteria in the Supplementary Information S1 to be G.A.S., $\lim_{k\rightarrow \infty}\widetilde{\mathbf{w}}(k) = \widetilde{\mathbf{w}}^{MAX}$.
However, even exponential stability does not necessarily guarantee that

\begin{equation}\label{eqn:safety}
    \widetilde{\mathbf{w}}(k) < \widetilde{\mathbf{w}}^{MAX} \; \forall k \; \in \mathbb{N}^+ .
\end{equation}

\noindent For example, this safety condition would not hold for an underdamped SISO linear system.

Verifying the condition (\ref{eqn:safety}) can be done instead by calculating an invariant set for some given constraint~\cite{gilbert_linear_1991,blanchini_set_1999}.
First, we pose the inequalities of the safety constraint as a polyhedron, noting that in terms of the error dynamics, safe operation holds if $\widetilde{\mathbf{w}} < \widetilde{\mathbf{w}}^{MAX} \Rightarrow \mathbf{e} \leq 0$.
Choose an arbitrary lower bound on the actuator state, $w^{LB} \leq w$, in order to consider operations on a closed set.
This polyhedron can be expressed as

\begin{equation}\label{eqn:safe_set}
    \mathcal{S} = \{\mathbf{e} \; \; | \; \; \mathbf{H} \mathbf{e} \leq \mathbf{h} \},
\end{equation}

\noindent where

\begin{equation}\label{eqn:Hh}
    \mathbf{H} = \begin{bmatrix} -1 & 0 \\ 0 & -1 \\ 1 & 0 \\ 0 & 1 \end{bmatrix}, \quad \quad
    \mathbf{h} = \begin{bmatrix} w^{LB} \\ 0 \\ 0 \\ 0 \end{bmatrix}
\end{equation}

\noindent since the second element of $\widetilde{\mathbf{w}}$ is constrained to be identical to 1.

We then calculate a maximum positive invariant set $\mathcal{O}_{\infty}$, which contains all~\cite{blanchini_set_1999} the invariant sets $\mathcal{O} \subseteq \mathcal{S}$ such that $\widetilde{\mathbf{w}}(0) \in \mathcal{O} \Rightarrow \widetilde{\mathbf{w}}(k) \in \mathcal{O} \; \forall \; k \in \mathbb{N}^+$ given the closed-loop dynamics of eqn.~(\ref{eqn:cl_err}).
This iterative procedure uses the operation Pre($\mathbf{A},\mathcal{O}$), which generates the set of all states which evolve into a set $\mathcal{O}:\{\mathbf{e} \; | \; \mathbf{P} \mathbf{e} \leq \mathbf{p}\}$ under the dynamics $\mathbf{A}$ in one step:

\begin{equation}\label{eqn:pre}
    \text{Pre}(\mathbf{A}, \mathcal{O}) = \{ \mathbf{e} \; | \; \mathbf{P} \mathbf{A} \mathbf{e} \leq \mathbf{p}\}.
\end{equation}

\noindent A set $\mathcal{O}$ is positive invariant under $\mathbf{A}$ if and only if $\mathcal{O} \subseteq \text{Pre}(\mathcal{O})$~\cite{gilbert_linear_1991}.
A more useful condition can be posed in terms of set equivalence:

\[
\mathcal{O} \subseteq \text{Pre}(\mathcal{O}) \; \iff \; \text{Pre}(\mathcal{O}) \cap \mathcal{O} = \mathcal{O}.
\]

\noindent which, with a polyhedral set, can be readily checked by comparing the constraints (e.g. the $\mathbf{P}$ and $\mathbf{p}$ of a set $\mathcal{O}$).

Given the safety constraint $\mathcal{S}$ in eqn.~(\ref{eqn:safe_set})-(\ref{eqn:Hh}), the well-known Alg.~\ref{alg:oinf} returns the set of initial conditions expressed as a polyhedron for which our system will remain within $\mathcal{S}$.
We implemented Alg.~\ref{alg:oinf} using the MPT3 Toolbox~\cite{MPT3} in MATLAB.
Executing Alg.~\ref{alg:oinf} for our particular $\mathbf{A}$ matrix (system identification of the SMA discussed below) with a variety of $\gamma < 1$ produced $\mathcal{O}_{\infty}=\mathcal{S}$ in every case; i.e., the iteration returned in one step.
This expected behavior verifies safety, confirming our intuition that as long as our supervisor's controller activates while the actuator state is $\widetilde{\mathbf{w}} < \widetilde{\mathbf{w}}^{MAX}$, it will remain so (theoretically) for all time.

\begin{algorithm}[h!]
\caption{Maximum Positive Invariant Set Calculation~\cite{gilbert_linear_1991}} \label{alg:oinf}
\SetKwInput{KwInput}{Input}                
\KwInput{$\mathbf{A}$, $\mathcal{S}$}
$\mathcal{O}_0 \gets \mathcal{S}$, \quad \quad $\mathcal{O}_1 \gets \text{Pre}(\mathbf{A}, \mathcal{O}_0) \cap \mathcal{O}_0$\;
\While{$\mathcal{O}_{j} \neq \mathcal{O}_{j-1}$}
{
$\mathcal{O}_{j+1} \gets \text{Pre}(\mathbf{A}, \mathcal{O}_j) \cap \mathcal{O}_j$\;
$j \gets j+1$
}
\Return{$\mathcal{O}_{\infty} = \mathcal{O}_j$}
\end{algorithm}

\subsection{Supervisor Integration with Pose Controller}

The controller in eqn.~(\ref{eqn:supervisor}) drives our actuator states ($\mathbf{w}$) to a maximum safe value.
This section incorporates the supervisor in composition with another controller, which presumably feeds back the full robot state $\mathbf{x}$ including pose.

So, assume there is a feedback controller $\mathbf{v}(\mathbf{x}(k)) = [v_1(\mathbf{x}(k)), \; \hdots, \; v_p(\mathbf{x}(k))]^\top$, developed independently from the supervisor, that would close the loop of eqn.~(\ref{eqn:fullsys}).
Propose instead the following composition of $\mathbf{v}(\mathbf{x})$ and the supervisor:




\begin{align}
    & \mathbf{u}(\mathbf{x}(k)) = \mathbf{u}^s(\mathbf{x}(k)) = [u^s_1(\mathbf{x}(k)), \; \hdots, \; u^s_p(\mathbf{x}(k))]^\top \label{eqn:boldu_s} \\
    & u^s_i(\mathbf{x}(k)) =
    \begin{cases}
        v_i(\mathbf{x}(k)) & \text{if} \quad v_i(\mathbf{x}(k)) \leq u_i^{MAX}(\mathbf{x}(k))\\
        u_i^{MAX}(\mathbf{x}(k)) & \text{else}
    \end{cases} \label{eqn:u_s}
\end{align}

\noindent where $u_i^{MAX}(\mathbf{x}(k))$ replaces the time indexing in eqn.~(\ref{eqn:supervisor}) with feedback, noting the actuator state $\widetilde{\mathbf{w}}$ is part of the full state $\mathbf{x}$.

We note that this composed system is both continuous (in $\mathcal{C}^0$) and Lipschitz continuous under certain conditions on $v_i(\mathbf{x})$.
See Supplementary Information S1 Sec. 2.2 for a proof.

Finally, we show the most important property of $u^s$, the safety verification that motivates all the work in this article.

\begin{theorem}
{\normalfont Safety.} Consider the closed-loop system $\mathbf{x}(k+1) = f(\mathbf{x}(k), \mathbf{u}^s(\mathbf{x}(k))) = f^{CL}(\mathbf{x})$ defined by eqns. (\ref{eqn:fullsys}), (\ref{eqn:boldu_s})-(\ref{eqn:u_s}), and the $u_i^{MAX}$ in eqn.~(\ref{eqn:supervisor}) for all actuators $i=1\hdots m$.
If Alg.~\ref{alg:oinf} verifies that the set $\mathcal{S}$ defined in eqns. (\ref{eqn:safe_set})-(\ref{eqn:Hh}) is positively invariant for the supervisor's closed-loop error dynamics of eqn.~(\ref{eqn:cl_err}), then $\mathcal{S}$ is also invariant under $f^{CL}(\mathbf{x})$, and




\begin{align*}
    (\widetilde{\mathbf{w}}_i(0) - & \widetilde{\mathbf{w}}_i^{MAX})  \in \mathcal{S} \Rightarrow \widetilde{\mathbf{w}}_i(k) < \widetilde{\mathbf{w}}_i^{MAX} \; \; \forall k \in \mathbb{N}^+.
\end{align*}

\end{theorem}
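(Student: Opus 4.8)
The plan is to reduce the safety of the nonlinear, switched closed loop $f^{CL}$ to the already-verified linear invariance of $\mathcal{S}$ under the supervisor's error dynamics, leaning on two structural facts: the triangular form of eqn.~(\ref{eqn:fullsys}), in which the actuator states $\mathbf{w}$ evolve independently of the pose $\mathbf{z}$, and the monotonicity of $h(\cdot,\cdot)$ established earlier. Because the actuators are decoupled, I would argue one index $i$ at a time and treat each scalar subsystem $w_i(k+1)=h_i(w_i(k),u_i(k))$ in isolation; the dependence of the pose controller $v_i$ on the full state $\mathbf{x}$ is then immaterial, since the argument uses only the scalar relationship between $u^s_i$ and $u^{MAX}_i$ at the current state, not how either was produced. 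The whole claim then follows by induction on $k$ with the hypothesis $\mathbf{e}_i(k)\in\mathcal{S}$, equivalently $\widetilde{\mathbf{w}}_i(k)\le\widetilde{\mathbf{w}}_i^{MAX}$.

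For the inductive step I would proceed in three moves. First, read off from eqn.~(\ref{eqn:u_s}) that $u^s_i(\mathbf{x}(k))\le u^{MAX}_i(\mathbf{x}(k))$ in \emph{both} branches of the case split, so the composed input never exceeds the supervisor's saturation input. Second, apply monotonicity at the fixed current state $w_i(k)$: since $u^s_i\le u^{MAX}_i$, we get $h_i(w_i(k),u^s_i(k))\le h_i(w_i(k),u^{MAX}_i(k))$, i.e. the actual successor lies at or below the state the pure supervisor would have produced from the same point. Third, invoke the certificate: applying $u^{MAX}_i$ yields exactly the closed-loop error dynamics of eqn.~(\ref{eqn:cl_err}), $\mathbf{e}_i(k+1)=(1-\gamma)\mathbf{A}\mathbf{e}_i(k)$, and since Alg.~\ref{alg:oinf} verified $\mathcal{S}$ to be positively invariant under this linear map, the hypothesis $\mathbf{e}_i(k)\in\mathcal{S}$ forces the supervisor-only successor into $\mathcal{S}$ and hence below $\widetilde{\mathbf{w}}_i^{MAX}$. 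Chaining the two inequalities gives $\widetilde{\mathbf{w}}_i(k+1)\le\widetilde{\mathbf{w}}_i^{MAX}$, closing the induction and showing $\mathcal{S}$ invariant under $f^{CL}$.

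The crux — and the step I expect to require the most care — is the per-step coupling in this comparison: the monotone bound and the linear invariance must be applied at the \emph{same} physical state $w_i(k)$, so the comparison ``supervisor trajectory'' is regenerated at each step from the actual state rather than run as a fixed reference. This is exactly what lets a one-step monotone dominance compose with a one-step invariance certificate without the two trajectories drifting apart. Two secondary points also need attention. First, the lower facet of $\mathcal{S}$ (the $w^{LB}$ row of eqn.~(\ref{eqn:Hh})) is introduced only to close the set; since $u^s_i\le u^{MAX}_i$ drives the state \emph{downward}, I would guard it by the physical input floor or by taking $w^{LB}$ conservatively, noting that the safety conclusion itself concerns only the upper facet. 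Second, the conclusion is \emph{strict}, $\widetilde{\mathbf{w}}_i(k)<\widetilde{\mathbf{w}}_i^{MAX}$; since the second component of $\mathbf{e}_i$ is identically zero, the effective error reduces to the scalar contraction $e_{1,i}(k+1)=(1-\gamma)a_{1,i}\,e_{1,i}(k)$, so from a strictly interior initial condition a strictly negative $e_{1,i}$ stays strictly negative under $(1-\gamma)a_{1,i}\in(0,1)$ (the boundary $e_{1,i}=0$ being the invariant equilibrium), and strict interiority propagates for all $k$.
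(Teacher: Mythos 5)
Your proposal is correct and follows essentially the same route as the paper's proof: read off $u^s_i \leq u^{MAX}_i$ from the case split in eqn.~(\ref{eqn:u_s}), apply monotonicity of $h_i(\cdot,\cdot)$ at the current state to dominate the actual successor by the supervisor-only successor, and invoke the verified invariance of $\mathcal{S}$ under eqn.~(\ref{eqn:cl_err}) to place that comparison successor at or below $\widetilde{\mathbf{w}}_i^{MAX}$. You are somewhat more explicit than the paper --- spelling out the induction, the per-step regeneration of the comparison trajectory from the actual state, and the strictness of the final inequality (which the paper's one-step argument leaves at $\leq$) --- but the core argument is identical.
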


\begin{proof}
Invariance of $\mathcal{S}$ under the action of the supervisor's controller alone, given by assumption, has proven that applying $u_i^{MAX}(\mathbf{x}(k))$ gives

\begin{equation}
    u_i^{MAX}(\mathbf{x}(k)) \Rightarrow \widetilde{\mathbf{w}}_i(k+1) \leq \widetilde{\mathbf{w}}_i^{MAX}.
\end{equation}

Then by the definition of controller in eqn.~(\ref{eqn:u_s}), $u^s_i(\mathbf{x}) \leq u_i^{MAX}(\mathbf{x})$.
Let the resulting actuator state from applying $u^s_i(\mathbf{x}(k))$ be $\widetilde{\mathbf{w}}_i^s(k+1)$.
From the discussion above, $h(\cdot, \cdot)$ is a monotone control system, and so applying $u^s_i \leq u_i^{MAX}$ gives

\begin{align*}
    u^s_i(\mathbf{x}(k)) \leq u_i^{MAX}(\mathbf{x}) & \Rightarrow \widetilde{\mathbf{w}}_i^s(k+1) \leq \widetilde{\mathbf{w}}_i(k+1) \\
    & \Rightarrow \widetilde{\mathbf{w}}_i^s(k+1) \leq \widetilde{\mathbf{w}}_i^{MAX}.
\end{align*}

\end{proof}

\textit{Remark}.
The above is a simple consequence of the same intuition as with the supervisor's tuning parameter $\gamma$.
For our example application of soft thermal actuators, applying less electrical power causes a lower temperature, and so applying less than a safe maximum of power will guarantee a safe temperature.

\section{Hardware Testbed}

The control system derived above is applicable for any soft robotic system whose equations of motion can be put in the form of eqn.~(\ref{eqn:fullsys})-(\ref{eqn:act_affine}).
As one particular application, this article considers feedback control of a soft robotic limb constructed with thermally-actuated shape memory alloy (SMA) wire coils.
Versions of this limb, previously developed as part of a soft underwater robot~\cite{patterson_untethered_2020}, have recently been deployed by the authors for both open-loop~\cite{wertz_trajectory_2022,sabelhaus_in-situ_2022} and closed-loop~\cite{patterson_robust_2022} control as a free-standing manipulator.
For eventual application in locomotion, this article uses the proposed supervisory controller to maintain safe actuator states when significant and sustained contact occurs.

\subsection{Hardware Design}

Our soft limb consists of a bulk silicone body embedded with actuators and sensors (Fig.~\ref{fig:construction}).
The limb is designed for planar motions only in order to develop algorithms with a reduced-dimensional state space.
The limb's body (Smooth-On Smooth-Sil 945), shown in Fig.~\ref{fig:construction}(a)(1), contains two embedded SMA wire coils (Dynalloy Flexinol, 0.020'' wire diameter). 
The SMA wire coils are constrained by a ridge along the body's horizontal axis, as shown in Fig.~\ref{fig:construction}(a)(2) and Fig.~\ref{fig:construction}(b)(2), so that actuation forces cause bending deflections.

\begin{figure}[htbp]
    \vspace{0.5cm}
    \centering
    \includegraphics[width=6in]{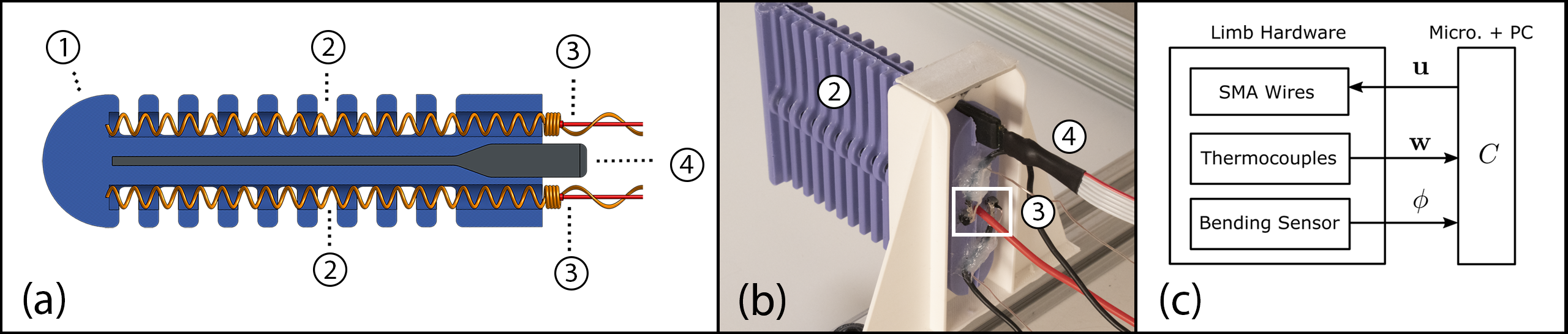}
    \caption{Our hardware testbed consists of a soft robot limb actuated with shape-memory-alloy (SMA) wire coils. A cross section of the limb from Fig.~\ref{fig:overview} shows (a)(1) the limb's bulk body, (a)(2) two antagonistic SMA wire coils, (a)(3) thermocouples attached to SMAs at the rear of the limb, and (a)(4) a soft capacitive bending sensor. Rear angled view of the limb shows (b)(2) the SMAs constrained along a ridge on each side, (b)(3) the thermocouples attached to the SMAs, and (b)(4) the bending sensor in its groove at the top of the limb. A functional block diagram of the limb (c) shows the sensor measurements and control input, as measured and commanded by the controller running on a microcontroller and PC.}
    \label{fig:construction}
    \vspace{0.5cm}
\end{figure}

The two SMA wires are actuated through resistive (Joule) heating.
Current through the wires was controlled using pulse-width modulation (PWM) to N-channel power MOSFETs connected to a 7V power supply. 
A microcontroller sets the PWM duty cycle.

Three sensors are located on the limb: one for the body's pose, and one each for the temperatures of the wires.
Temperature is sensed by thermocouples (Omega Engineering, type K, 30 AWG) affixed to the SMA coils at the rear of the limb using thermally conductive epoxy (MG 8329TCF) via the fabrication procedure described in our prior work~\cite{sabelhaus_in-situ_2022}, see Fig.~\ref{fig:construction}(a)(3) and Fig.~\ref{fig:construction}(b)(3).
A soft capacitive bending sensor (Bendlabs, Inc.) is inserted into a groove in the limb (Fig.~\ref{fig:construction}(a)(4) and Fig.~\ref{fig:construction}(b)(4)), and provides a single measurement of angular deflection of the limb, $\theta(t)$, as shown in Fig.~\ref{fig:overview}(d).

\subsection{Robot and Actuator Model and Calibration}\label{sec:sma_model}

We use a simplified model of the limb for this article, with a state space of

\begin{equation}
    \mathbf{x} = \begin{bmatrix} \theta & \dot \theta & \mathbf{w} \end{bmatrix}^\top \in \mathbb{R}^4, \quad \mathbf{w} = \begin{bmatrix} T_0 & T_1 \end{bmatrix},
\end{equation}

\noindent where the body pose is deflection angle and actuator states are the temperatures of the two wires.
Importantly, this article is not concerned with developing provably-stabilizing controllers for the body pose, and our supervisory controller in eqn.~(\ref{eqn:supervisor}) does not require a model of body pose dynamics $g(\cdot)$.
Therefore, feedback of only the net deflection angle $\theta$ may be sufficient for pose control as in prior work on SMA-powered robots~\cite{Elahinia2002,jin_continuous_2015,ho_modeling_2013,Grant1997}.

The thermal dynamics of our SMA actuators can be approximated in the form of eqn.~(\ref{eqn:act_affine}).
As in prior work~\cite{wertz_trajectory_2022,ho_modeling_2013}, the first-principles model for Joule heating in discrete time is

\begin{equation}
    T_i(k+1) = -\frac{h_c A_c}{C_v}(T_i(k) - T_0)\Delta_t + \frac{1}{C_v} \Delta_t P_i(k)
\end{equation}

\noindent for the $i$-th SMA at time $k$ with specific heat capacity \(C_v\), ambient heat convection coefficient \(h_c\), surface area \(A_c\), and ambient temperature \(T_0\).
The input electrical power, $P_i(k)$, is current controlled with $P=\rho J^2$, where $\rho$ is resistance and $J$ is current density.
For our PWM input, we assume that the duty cycle \(u_i\) modulates the fraction of time current is conducting through the SMA and that current is constant when flowing, so $P_i(k)=\rho J^2 u_i(k)$.
Substituting and factoring out unknown constants into lumped coefficients,

\begin{equation}\label{eqn:joule_affine}
    T_i(k+1) = a_{(1,i)} T_i(k) + a_{(2,i)} u_i(k) + a_{(3,i)}, 
\end{equation}

\noindent just as in eqn.~(\ref{eqn:act_affine}).
Our system input is $\mathbf{u} \in [0,1]^2$.
We calibrate eqn.~(\ref{eqn:joule_affine}) for each SMA from data collected in hardware, using the same procedure as in our prior work on this hardware platform~\cite{wertz_trajectory_2022,sabelhaus_in-situ_2022}.

\section{Pose Feedback for an SMA-Powered Soft Robot}

We employ two representative pose controllers for SMA-actuated robots in order to demonstrate that our supervisory control scheme is agnostic to choice of $\mathbf{v}(\mathbf{x})$.

\subsection{Antagonistic Actuation as a Single-Input, Single-Output System}

Our supervisor is derived for an arbitrary number of soft actuators ($m$) in the form of eqn.~(\ref{eqn:act_affine}).
However, for our particular SMA-powered robot, prior research has shown that a pair of $m=2$ antagonistic actuators can be reformulated as a single-input, single-output (SISO) system~\cite{ho_modeling_2013,prechtl_self-sensing_2020,patterson_robust_2022}.
We first note that the duty cycle input for each wire, $u_i \in [0, 1]$, does not allow for a negative control input: we have no ability to cool the wire.
However, since the actuators are oriented antagonistically, we can map one of the two SMA duty cycles to a negative range of a single scalar input, which we denote $\mu \in [-1, 1]$, 

\begin{align}\label{eqn:siso}
    \mathbf{v}(\mathbf{x}) & = \mathbf{v}(\mu(\mathbf{x})), \\
    \mathbf{v}(\mu(\mathbf{x})) & = \begin{cases}
    \left[ \mu(\mathbf{x}) \; \; \;  0 \right]^\top & \text{if} \quad \mu(\mathbf{x}) \geq 0 \\
    [0 \; \; \; -\mu(\mathbf{x})]^\top & \text{if} \quad \mu(\mathbf{x}) < 0
    \end{cases}
\end{align}

\noindent We therefore only need to specify a (bounded) SISO pose controller, $\mu(\mathbf{x}) : \mathbb{R}^n \mapsto [-1, 1]$.

\subsection{PI with Anti-Windup}

We first test a proportional-integral (PI) controller.
We augment it with an anti-windup (AW) block~\cite{astrom_advanced_2006} since our control $\mu$ saturates at $\pm 1$.
Defining $e(k) = \theta(k) - \bar \theta(k)$ as the difference between current vs. reference bending angle, implicitly indexing into $\mathbf{x}(k)$, and with some minor abuses of notation for clarity, our PI-AW feedback controller takes the form:




\begin{align}
    \eta(e) = & K_p e(k) + K_I \left[ \sum_{\tau=0}^{k-1}\left(e(\tau)\Delta_t + K_{AW}\left( \mu(\tau-1) - \eta(\tau-1) \right) \Delta_t \right)  \right],\\
    \mu(e) = & \; \text{sat}(\eta(e))
\end{align}

\noindent where the linear saturation function $\text{sat}(\cdot) : \mathbb{R} \mapsto [-1, 1]$ is defined as

\begin{equation}
    \text{sat}(x) = \begin{cases}
    1 & \text{if} \quad x \geq 1 \\
    x & \text{if} \quad -1 < x < 1 \\
    -1 & \text{if} \quad x \leq -1
    \end{cases}
\end{equation}

\noindent Therefore, $\eta$ is the internal state for the anti-windup compensator, which tracks the difference between attempted versus applied control input.
Tuning of the constants $K_p, K_I$, and $K_{AW}$ is discussed in Supplementary Information S1, Sec. 3.

\subsection{Sliding Mode Controller with Boundary Layer}

In addition to PI feedback as a standard approach, there has been much prior success in control of SMA-based robots and mechanisms using sliding mode control (SMC)~\cite{Elahinia2002,jin_continuous_2015,wiest_indirect_2014}.
SMC naturally addresses saturation issues, since switching occurs between some minimum and maximum input~\cite{slotine1991applied}.
We employ a model-free sliding mode controller with a boundary layer, as suggested by Elahinia et al.~\cite{Elahinia2002}, with a sliding surface $s$, using a finite-difference approximation of derivative as

\begin{align}
    \dot e(k) \approx & \frac{1}{\Delta_t} \left( e(k) - e(k-1) \right), \notag \\
    s(e) \vcentcolon= & \; \dot e(k) + 2\lambda e(k) +  K_I \left[ \sum_{\tau=0}^{k-1} e(\tau) \Delta_t \right] \\
    \mu(s) = & \; \text{sat}\left( \frac{s}{\phi} \right)
\end{align}

\noindent where $\phi \in \mathbb{R}^+$ is the boundary layer thickness and $\lambda \in \mathbb{R}^+$ is the phase plane angle~\cite{slotine1991applied}.
Tuning of this controller is also discussed in S1 Sec. 3.

\section{Supervisory Control Results}

We perform three sets of tests to characterize and validate the action of the supervisor on the above controllers.
In order to test our supervisor without permanently damaging the robot, we chose artificially low temperature constraints (between $60^\circ C$ and $90^\circ C$) for the tests.

\subsection{Theoretical Performance Verification}

We first confirmed that our framework functions as intended by implementing both the PI-AW and SMC controllers, and comparing their operation with versus without the supervisor.
We chose an arbitrary, but aggressive, step setpoint angle ($\bar \theta=40^\circ$ bending) and a maximum temperature of $w_i^{MAX}=65^\circ C$ for the supervisor, with $\gamma=0.2$ for conservative operation.

Fig.~\ref{fig:step} and Supplementary Video S2 show the results of all four tests: both controllers, with versus without the supervisor.
Both the PI-AW and SMC controllers, without the supervisor, regulate the limb around the desired setpoint with low error.
However, both controllers cause the SMA wire temperatures to drift upwards, representing potentially unsafe operation.
In contrast, the controllers with the supervisor included cause temperature to saturate at the maximum.
This is the intended behavior: the supervisor's activation sacrifices state tracking in favor of safe actuator states, and $w \leq w^{MAX}$ for all actuators.
We conclude from these tests that the supervisor indeed operates independently of the underlying pose controller $\mathbf{v}(\mathbf{x})$.

\begin{figure}[htbp]
    \centering
    \includegraphics[width=0.7\columnwidth]{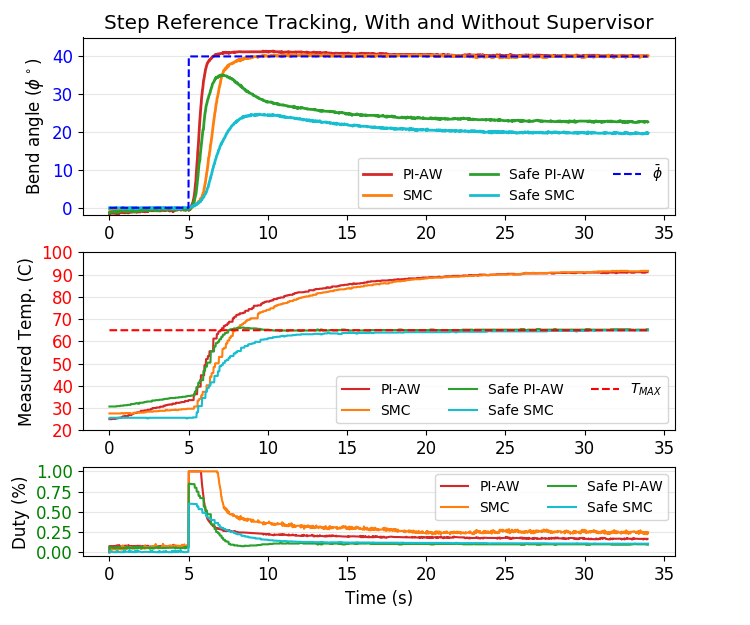}
    \caption{The two different pose controllers (PI-AW in red, SMC in orange) both stabilize around the desired angle, but may overheat the limb. However, imposing the supervisor ensures safe operating temperatures (blue, green) while attempting to reach the control goal.}
    \label{fig:step}
\end{figure}

\subsection{Supervisory Controller Tuning}

Our next test, with $\bar \theta=30^\circ$ and a max temperature of $w_i^{max}=60^\circ C$, executed the PI-AW controller with the supervisor for various values of its parameter $\gamma$.
We only test using the PI-AW pose controller, since Fig.~\ref{fig:step} shows similar activation behaviors for both pose controllers.

The data in Fig.~\ref{fig:supertuning} and Supplementary Video S3 demonstrate large variations in behavior depending on $\gamma$.
For small values ($\gamma \in [0.05, 0.2]$), the supervisor activates almost as soon as control begins ($t=5$ sec.), and temperature trajectories rise slowly but remain safe.
For larger values ($\gamma \in [0.3, 0.9]$), the supervisor is much less aggressive.
For the largest values, the limb briefly reaches the target $\bar \theta$ before the supervisor forces a lower input.
We do not report a $\gamma=1$ result since its performance was poor.

\begin{figure}[htbp]
    \centering
    \includegraphics[width=0.7\columnwidth]{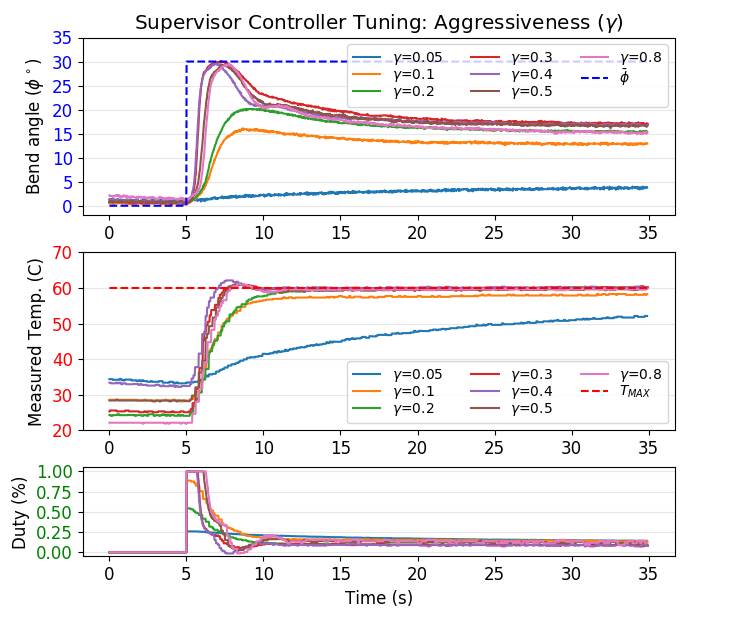}
    \caption{Controller tuning shows that large values of the parameter $\gamma \in (0,1)$, when the actuator dynamics are not known exactly, may slightly violate safety constraints. For practical uses we found that $\gamma < 0.3$ demonstrated safe operation.}
    \label{fig:supertuning}
\end{figure}

At larger values of $\gamma$, some violation of the safety constraint is observed, which is expected given our system identification of $\mathbf{A}$ in eqn.~(\ref{eqn:joule_affine}).
Unmodeled dynamics cause the value for $u^{MAX}$ to be artificially large.
This suggests that smaller $\gamma$ values are best in practice, since soft robot modeling is often imprecise.

\subsection{Physical Interactions}

We finally stress-test our feedback method in three different physical interaction scenarios, each representing an eventual use of our soft limb.
Each used the PI-AW controller for pose, with $\gamma=0.2$.
These three scenarios, in Fig.~\ref{fig:phys_int_screencaps}, include environmental contact, human contact, and the attempted tracking of infeasible/unsafe trajectories.
The first test (Fig.~\ref{fig:phys_int_screencaps}(a)) places a wall next to the limb that blocks it from reaching its target bend angle.
In the second test (Fig.~\ref{fig:phys_int_screencaps}(a)), a human pushes on the robot, causing a disturbance.

\begin{figure}[htbp]
    \vspace{0.5cm}
    \centering
    \includegraphics[width=6in]{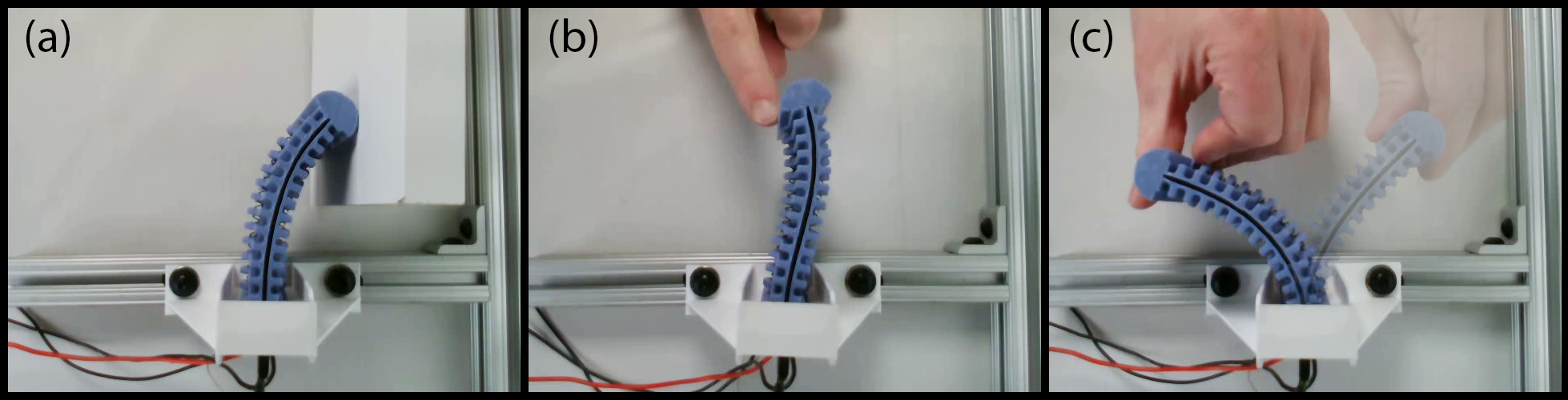}
    \caption{Three physical interactions that could cause damage to a soft robot under feedback: (a) contact and collision with kinematic constraints, such as a wall or floor, (b) unmodeled disturbances such as human interaction, and (c) attempting infeasible motions such as might unintentionally occur in learning from demonstration tasks.}
    \label{fig:phys_int_screencaps}
    \vspace{0.5cm}
\end{figure}

The final physical interaction example is tracking of a trajectory recorded beforehand by a human operator moving the limb, as in our prior work~\cite{wertz_trajectory_2022}.
Substituting the recorded trajectory as $\bar{\theta}_{1, \hdots, K}$ attempts to recreate that motion, however unsafe it may be.
With the supervisor, this procedure can be viewed as a crude form of \textit{learning from demonstration}~\cite{argall_survey_2009}, where a feedback controller mimics a demonstrated action under safety/feasibility constraints.

The data from these tests are in Fig.~\ref{fig:usecase_data}, and demonstrations are shown in Supplementary Videos S4-S6.
For the wall interaction, the unsafe control system without the supervisor heats the SMA wire rapidly and was manually deactivated before the test concluded, whereas the supervisor keeps the actuator at a steady maximum temperature.
For the human disturbance, the unsafe controller responded dynamically to the disturbance, causing continued heating, whereas the test with the supervisor prevented a changing input during those motions and implicitly bounded the force applied to the human.
Finally, for the ``learning from demonstration'' test, the unsafe controller was able to faithfully track the desired motion; however, the SMA temperatures violated constraints.
The corresponding test with the supervisor demonstrates it dynamically activating and deactivating as both wires reach potentially unsafe operation.

\begin{figure}[htbp]
    \vspace{0.5cm}
    \hspace{-1.75in}
    \includegraphics[width=7.5in]{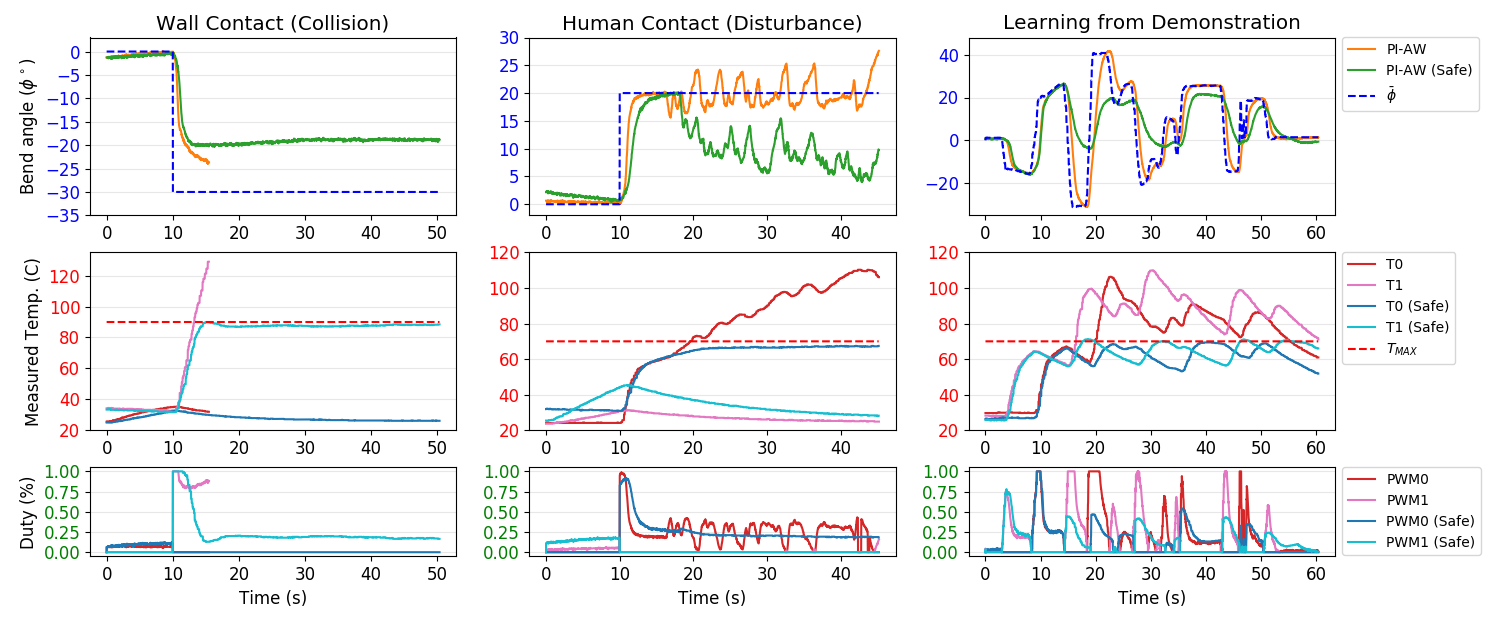}
    \caption{In each of the three physical interaction examples, the supervisory controller ensures safe operation when the robot's actuator state would otherwise violate safety constraints.}
    \label{fig:usecase_data}
    \vspace{0.5cm}
\end{figure}

\section{Discussion}

This article proposes a supervisory control scheme for a set of soft robot actuators that follow simple dynamics models, provably guaranteeing that actuator states remain in a safe region.
The proposed supervisor is simple to formulate and implement, with very low online computational cost.
Experiments show that the controller can be tuned for conservative operation even in the case when the actuator dynamics are a significant approximation, making the framework applicable for a variety of soft robot actuator designs.
We demonstrate that the controller safely operates on a thermal actuator, maintaining safe temperatures in a variety of contact-rich environments.

This work highlights the inherent relationship between force applied at a manipulator tip and the bounds on its actuator state, for example, in the human contact disturbance test.
Recent work has shown that environmental contact forces for a soft robot may be estimated simply from pose measurements~\cite{santina_datadriven_2020}.
Therefore, if a model for the body dynamics is available, it may be possible to convert between actuator bounds and body force bounds, allowing the concepts from this article to extend to safe interactions of body-to-environment: safety in pose as well as safety in actuator.

\section{Future Work and Conclusion}

Multiple directions of future work are anticipated to make the proposed framework more robust and applicable with fewer assumptions required.
In particular, a probabilistic actuator dynamics model (and accompanying modifications to the controller) may provide better robustness when the linearization is poor.
Similarly, future work will examine adaptive control for capturing unmodeled dynamics.
If the actuator dynamics cannot be linearized, we will examine optimization techniques for the supervisor, such as model-predictive control.
For robots with coupled actuator dynamics or more than one scalar state per actuator, future work may saturate actuators in terms of conic section bounds~\cite{angeli_monotone_2003}.

The system in this article relies on feedback for the supervisor, requiring sensors for all actuator states.
However, if the actuator dynamics model sufficiently captures the underlying physical phenomena, it may be possible to estimate the states $\widetilde{\mathbf{w}}(k)$ for use in the supervisor, eliminating the need for external sensing.

Lastly, a major motivation of this article is applying feedback control to soft robots in locomotion tasks.
We plan to implement our supervisor on SMA-actuated walking soft robots~\cite{patterson_untethered_2020} to demonstrate safe, closed-loop, locomotion in state-feedback.
Safe locomotion with feedback will bring soft robots closer to real-world deployment and positive demonstrations of their applicability to real-world tasks.


\section*{Acknowledgements} 

We thank Xiaonan Huang, Richard Desatnik, and all members of the Soft Machines Lab at CMU for their collaboration in the design framework for the robot studied in this article.

\section*{Author Contribution Statement}
APS: Conceptualization, Formal Analysis, Methodology, Software, Funding Acquisition, Writing - original draft. ZJP: Methodology, Software, Writing - review and editing. ATW: Methodology, Software, Writing - review and editing. CM: Funding Acquisition, Supervision, Writing - review and editing.

\section*{Author Disclosure Statement}
No competing financial interests exist.

\section*{Funding Information}

This work was in part supported by the Office of Naval Research under Grant No. N000141712063 (PM: Dr. Tom McKenna), the National Oceanographic Partnership Program (NOPP) under Grant No. N000141812843 (PM: Dr. Reginald Beach), and an Intelligence Community Postdoctoral Research Fellowship through the Oak Ridge Institute for Science and Education.

\section*{Supplementary Material}

Supplementary Information S1

Supplementary Video S2

Supplementary Video S3

Supplementary Video S4

Supplementary Video S5

Supplementary Video S6

\bibliography{references}

\end{document}